\documentclass[10pt]{article} 

\usepackage[preprint]{rlj} 

\usepackage{amssymb}            
\usepackage{mathtools}          
\usepackage{mathrsfs}           
\usepackage{graphicx}           
\usepackage{subcaption}         
\usepackage[space]{grffile}     
\usepackage{url}                
\usepackage{pgfplots}
\usepackage{amsthm}
\usepackage{wrapfig}
\usepackage{booktabs}

\pgfplotsset{compat=1.18}
\newtheorem{definition}{Definition}
\newtheorem{proposition}{Proposition}
\newtheorem{theorem}{Theorem}
\newtheorem{lemma}{Lemma}

\title{On the Structural Non-Preservation of Epistemic Behaviour under Policy Transformation}

\setrunningtitle{On the Structural Non-Preservation of Epistemic Behaviour under Policy Transformation}

\author{Alexander Galozy\textsuperscript{1}}

\emails{alexander.galozy@hh.se, alex.galozy@gmail.com}

\affiliations{
$^{1}$\textbf{Center for Applied Intelligent Systems Research, Halmstad University, Sweden}\\
}

\contribution{
We formalise epistemic behaviour as probe-conditioned behavioural dependency: variation in action distributions with respect to internally accumulated information under fixed observations, defined at the level of observable policy behaviour without assuming access to architectural or latent structure.
}
{
Internal memory and belief are central to RL under partial observability \citep{sutton1998introduction,ghavamzadeh2015bayesian} and are exploited in meta-RL and world-model agents \citep{duan2016rl2,ha2018worldmodels}. Prior work typically treats these mechanisms architecturally. We instead characterise information-conditioned interaction directly through probe-induced response profiles.
}

\contribution{
We define probe-relative $\epsilon$-behavioural equivalence and a within-policy behavioural distance that quantifies probe-conditioned interaction patterns.
}
{
Trajectory-level comparisons need not distinguish policies that differ in internal information processing. Related notions such as bisimulation focus on state abstractions and value consistency \citep{feres2019bisimulation,Zang2023bisimulationCollapse}, while multi-agent policy-distance metrics compare action distributions across agents at matched observations \citep{bettini2025snd,Hu2024mapd}. Our formulation defines equivalence over probe-induced response profiles within a single policy, without assuming identifiable latent states.
}

\contribution{
We prove via Proposition~1 that the set of policies exhibiting non-trivial behavioural dependency is not closed under convex aggregation. We further prove a sufficient local condition (Theorem~1) under which gradient ascent on a skewed mixture objective decreases behavioural distance when the dominant-mode gradient aligns with the direction of steepest contraction.
}
{
Convex aggregation linearly interpolates response profiles and contracts behavioural distance, with explicit constructions demonstrating non-closure. Theorem~1 isolates a geometric alignment condition under which reward-driven optimisation locally reduces probe-conditioned separation under skewed latent priors. Minimal partially observable MDP experiments provide controlled witnesses of these mechanisms.
}

\keywords{Reinforcement Learning, Partial Observability, Behavioural Equivalence, Representation, Behavioural Evaluation} 

\summary{
Many reinforcement learning (RL) agents rely on internally accumulated information to guide action selection under partial observability. Yet common RL pipelines transform policies through aggregation and reward-driven optimisation without explicitly preserving such information-conditioned interaction patterns. We formalise epistemic behaviour as behavioural dependency, defined as systematic variation in action selection with respect to internal information under fixed observations, and introduce probe-relative behavioural equivalence. We prove that policies exhibiting non-trivial behavioural dependency are not closed under convex aggregation, and prove a sufficient local condition under which gradient-based optimisation contracts behavioural distance under distributional imbalance. These results provide a structural account of when information-conditioned behaviour is retained or attenuated under policy transformation.
}
\begin{document}

\makeCover  
\maketitle  

\begin{abstract}
Reinforcement learning (RL) agents under partial observability often condition actions on internally accumulated information such as memory or inferred latent context. We formalise such information-conditioned interaction patterns as \emph{behavioural dependency}: variation in action selection with respect to internal information under fixed observations. This induces a probe-relative notion of $\epsilon$-behavioural equivalence and a within-policy behavioural distance that quantifies probe sensitivity. We establish three structural results. First, the set of policies exhibiting non-trivial behavioural dependency is not closed under convex aggregation. Second, behavioural distance contracts under convex combination. Third, we prove a sufficient local condition under which gradient ascent on a skewed mixture objective decreases behavioural distance when a dominant-mode gradient aligns with the direction of steepest contraction. Minimal bandit and partially observable gridworld experiments provide controlled witnesses of these mechanisms. In the examined settings, behavioural distance decreases under convex aggregation and under continued optimisation with skewed latent priors, and in these experiments it precedes degradation under latent prior shift. These results identify structural conditions under which probe-conditioned behavioural separation is not preserved under common policy transformations.
\end{abstract}

\section{Introduction}

Reinforcement learning (RL) is typically evaluated through observable trajectories and task performance. Many behaviours that enable effective learning, however, depend on internally accumulated information beyond immediate observations. Under partial observability, policies often condition actions on memory or inferred latent context, even when external observations are fixed. Such information-conditioned behaviour is central to meta-RL \citep{duan2016rl2,rakelly2019pearl}, Bayesian approaches \citep{zintgraf2021variBad}, world models \citep{ha2018worldmodels,hafner2019dreamerv1,schrittwieser2020muzero}, and structured exploration \citep{pathak2017curiosity,bellemare2016cts,houthooft2016vime}.

Standard RL evaluation through aggregate reward, however, does not distinguish whether performance arises from robust information-conditioned strategies or from adaptation to dominant training modes. A policy may maintain high reward on frequently sampled contexts while losing probe-conditioned distinctions required for robustness under distributional shift. This creates an evaluation gap: reward stability does not imply preservation of internal dependencies that enable generalisation to rare or shifted contexts. While previous foundational works employ auxiliary self-supervised objectives \citep{jaderberg2017unreal,schwarzer2020spr} and regularisation techniques \citep{haarnoja2018sac} to maintain representational structure, such interventions lack formal characterisation of what epistemic structure is preserved.

Common RL transformations, such as convex aggregation, gradient-based optimisation and distillation, operate on parameters or action distributions without explicit constraints on information-conditioned interaction patterns. Because such patterns are encoded implicitly within representations, standard pipelines offer no inherent guarantee of their retention. Concrete manifestations in meta-RL, world models, and exploration are discussed in Appendix~\ref{app:vulnerabilities}.

To analyse when epistemic structure is preserved under policy transformation, we formalise epistemic behaviour as \textit{behavioural dependency}: variation in action selection with respect to internal information under fixed observations. Quantifying this variation yields a within-policy behavioural distance that, under a binary latent-mode assumption, upper-bounds worst-case return under prior shift and induces a probe-relative notion of $\epsilon$-behavioural equivalence over policies.

We establish that epistemic structure is not preserved under common transformations in the examined settings. First, we prove that policies exhibiting behavioural dependency are not closed under convex aggregation. Second, we prove a sufficient local condition under which gradient-based optimisation under skewed priors contracts behavioural distance, even while short-horizon reward remains near-optimal under the dominant prior.

\textbf{Behavioural Representation and Preservation (BRP)} articulates the representational perspective implied by these results, treating $\epsilon$-behavioural equivalence classes as the relevant units of preservation. The following sections develop the formal machinery, provide minimal reinforcement learning witnesses, and analyse the structural challenges of preserving epistemic behaviour.

\section{Beyond Kinematic Behaviour}
\label{sec:BeyondKinematics}

Standard reinforcement learning evaluates agents at the level of observable trajectories and reward. However, identical trajectories can arise from distinct internal information-processing strategies. To analyse preservation of information-conditioned behaviour, we distinguish observable interaction from the internal states that modulate action selection.

Kinematic behaviour refers to observable action and state sequences that determine reward and trajectory statistics. Latent behaviour refers to internal representations such as recurrent memory states or inferred task variables that mediate the mapping from observations to actions.

We define \emph{epistemic behaviour} as systematic variation in action selection with respect to internally accumulated information under fixed observations. This notion abstracts away from architectural details and focuses directly on information-conditioned interaction patterns.

Behavioural preservation concerns whether such probe-conditioned dependencies remain stable under transformations such as optimisation, compression, or aggregation. Because these dependencies are typically encoded implicitly within parameters, standard reinforcement learning pipelines do not in general provide structural guarantees of their retention. Preservation therefore requires analysing policies at the level of behavioural equivalence classes instead of individual parameters or latent states. We now formalise behavioural dependency and define the probe-relative equivalence classes that serve as the units of preservation.

\subsection{Behavioural Dependency and Equivalence}
\label{sec:behavioural_equivalence}

To analyse preservation of epistemic behaviour, we formalise behavioural dependency and the equivalence classes it induces. Since internal representations are architecture dependent and not directly identifiable, equivalence must be defined at the level of observable interaction instead of parameters. Policy-level behavioural distances based on action distributions at fixed observations have been proposed in multi-agent settings \citep{bettini2025snd,Hu2024mapd}, but those approaches measure differences across agents, whereas our focus is probe-induced variation within a single policy.

\paragraph{Notation.}
Let $\pi$ denote a stochastic policy over observation space $\mathcal{O}$ and action space $\mathcal{A}$. The internal state $h_t \in \mathcal{H}_\pi$ represents any mechanism by which a policy accumulates information over time. No assumption is made about the interpretability or identifiability of $h_t$. All equivalence notions are defined relative to a probe set $\mathcal{P}$. A policy interacting with an environment samples actions as
\[
a_t \sim \pi(a_t \mid o_t, h_t).
\]

\paragraph{Behavioural Dependency and Response Profiles.}
A policy exhibits epistemic behaviour if its action distribution varies as a function of internal information under fixed observations. Formally, behavioural dependency holds if there exist $o \in \mathcal{O}$ and $h, h' \in \mathcal{H}_\pi$ such that $\pi(\cdot \mid o, h) \neq \pi(\cdot \mid o, h')$.

Let $\mathcal{P}$ denote a set of interaction probes. For a policy $\pi$, define its response profile as
\begin{equation}
R_\pi(p,o) = \mathbb{E}_{h \sim p(\pi)}\!\left[\pi(\cdot \mid o,h)\right],
\end{equation}
which captures the observable action distribution under probe $p \in \mathcal{P}$ at observation $o$. For fixed $(p,o)$, $R_\pi(p,o)$ lies in the probability simplex $\Delta(\mathcal{A})$.

\paragraph{Behavioural Equivalence.}

To allow graded comparison of probe-conditioned response profiles, we define a probe-relative divergence at a fixed evaluation observation $o^\ast$:
\begin{equation}
d_{\mathcal{P}, o^\ast}(\pi_1, \pi_2)
=
\sup_{p \in \mathcal{P}}
\left\|
R_{\pi_1}(p,o^\ast)
-
R_{\pi_2}(p,o^\ast)
\right\|_1.
\end{equation}

\begin{definition}[$\epsilon$-Behavioural Equivalence]
Two policies $\pi_1$ and $\pi_2$ are said to be $\epsilon$-behaviourally equivalent with respect to $\mathcal{P}$ at $o^\ast$ if
\begin{equation}
d_{\mathcal{P}, o^\ast}(\pi_1, \pi_2) \le \epsilon.
\end{equation}
\end{definition}

The corresponding equivalence class is
\begin{equation}
[\pi]_{\mathcal{P},\epsilon}
=
\left\{
\pi' \mid
d_{\mathcal{P}, o^\ast}(\pi,\pi') \le \epsilon
\right\}.
\end{equation}

The equivalence class $[\pi]_{\mathcal{P}, \epsilon}$ groups policies with sufficiently similar probe-conditioned response profiles. We note that the term behavioural equivalence has appeared previously in Interactive POMDPs to aggregate models of other agents that induce identical optimal actions \citep{rathnasabapathy2006exact}. Our definition instead compares policies directly through probe-conditioned response profiles, independent of optimality criteria or reward structure.

\paragraph{Within-Policy Behavioural Distance.}

Fix an evaluation observation $o^\ast$ and two probes $p_0, p_1 \in \mathcal{P}$ chosen to induce distinct latent interaction contexts. The within-policy behavioural distance is defined as
\begin{equation}
d(\pi)
=
\left\|
R_\pi(p_0, o^\ast)
-
R_\pi(p_1, o^\ast)
\right\|_1.
\end{equation}

A policy exhibits non-trivial behavioural dependency (with respect to the selected probes) if $d(\pi) > 0$. This scalar quantity isolates probe-induced variation in action distributions at a fixed evaluation context and serves as a minimal observable surrogate for information-conditioned behavioural structure.

\subsection{The Functional Implication of Behavioural Distance}
\label{sec:functional_implication}

We next connect behavioural distance to task-level performance. In partially observable tasks where distinct latent modes require distinct optimal actions, insufficient probe-conditioned separation imposes a structural limit on robustness under prior shift. Let latent modes $m \in \{0,1\}$ require distinct optimal actions $a_0^\ast \neq a_1^\ast$ at $o^\ast$, and let $J(\pi \mid P)$ denote expected return under prior $P(m)$. Let $h_m$ denote the internal state induced at evaluation by the probe corresponding to latent mode $m$. Assume the expected return under a given mode $m$ is bounded by $V_{\max} - \Delta(1 - \pi(a_m^\ast \mid o^\ast, h_m))$, meaning any suboptimal action incurs a value penalty of at least $\Delta > 0$.

\begin{lemma}[Robustness Requires Epistemic Distance]
\label{lem:robustness_bound}
If the value penalty assumption holds and $d(\pi) \le \epsilon$, then
\begin{equation}
\min_{P} J(\pi \mid P) \le V_{\max} - \frac{\Delta}{2}(1 - \epsilon).
\end{equation}
\end{lemma}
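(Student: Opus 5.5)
We will prove the bound by evaluating the return at the two point-mass priors and then relating the two action probabilities to $d(\pi)$. Throughout, write $q_m = \pi(a_m^\ast \mid o^\ast, h_m)$ for $m\in\{0,1\}$. We identify the response profile under probe $p_m$ with $\pi(\cdot\mid o^\ast,h_m)$, since $h_m$ is the internal state that probe induces. If $p_m(\pi)$ is a genuine distribution over internal states, the same argument goes through with $q_m$ read as $R_\pi(p_m,o^\ast)(a_m^\ast)$ and the penalty applied in expectation. Two assumptions of the lemma do the work: $a_0^\ast\neq a_1^\ast$, and $d(\pi)=\|\pi(\cdot\mid o^\ast,h_0)-\pi(\cdot\mid o^\ast,h_1)\|_1\le\epsilon$.

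\textbf{Step 1: reduce to a single mode.} Since $J(\pi\mid P)=\sum_m P(m)\,J(\pi\mid m)$ is linear in $P$, the minimum over priors is at most the value under either point mass. Using the value penalty assumption, this gives
\[
\min_P J(\pi\mid P)\le \min_m J(\pi\mid m)\le V_{\max}-\Delta\bigl(1-\min_m q_m\bigr).
\]
It also gives the weaker bound obtained from the uniform prior, $V_{\max}-\Delta\bigl(1-\tfrac{q_0+q_1}{2}\bigr)$. It therefore suffices to show $q_0+q_1\le 1+\epsilon$. That yields $1-\tfrac{q_0+q_1}{2}\ge \tfrac{1-\epsilon}{2}$, which is exactly the claimed bound with the factor $\Delta/2$.

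\textbf{Step 2: the key inequality $q_0+q_1\le 1+\epsilon$.} The $\ell_1$ distance bounds each coordinate difference, but here we need the total-variation form. Because $a_0^\ast\neq a_1^\ast$, the probabilities of the two distinct actions under the same distribution sum to at most one:
\[
\pi(a_0^\ast\mid o^\ast,h_1)+\pi(a_1^\ast\mid o^\ast,h_1)\le 1 .
\]
Total variation equals half the $\ell_1$ norm, so for any event the probabilities under the two distributions differ by at most $\epsilon/2$. Applying this to the event $\{a_0^\ast\}$ gives $q_0\le \pi(a_0^\ast\mid o^\ast,h_1)+\epsilon/2$. Adding $q_1$ to both sides then gives $q_0+q_1\le 1+\epsilon/2\le 1+\epsilon$. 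This is even slightly stronger than needed.

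\textbf{Main obstacle.} The substantive step is Step 2. It uses the coupling between the two modes' optimal-action probabilities that comes from the distributions being close and the optimal actions being distinct. Getting the constants right requires the $\ell_1$-to-total-variation conversion: the naive coordinatewise bound $|q_0-\pi(a_0^\ast\mid o^\ast,h_1)|\le\epsilon$ already suffices. The rest is bookkeeping, including making explicit that the minimum over $P$ ranges over all priors on $\{0,1\}$.
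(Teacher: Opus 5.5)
Your proof is correct and follows the paper's argument. Like the paper, you combine $\pi(a_0^\ast\mid o^\ast,h_1)+\pi(a_1^\ast\mid o^\ast,h_1)\le 1$ with a coordinatewise closeness bound to get $q_0+q_1\le 1+\epsilon$, then finish with $\min(q_0,q_1)\le\tfrac{q_0+q_1}{2}$ and the value model; you also make explicit the point-mass reduction over priors, which the paper leaves implicit. Your total-variation step is a valid sharpening giving $V_{\max}-\tfrac{\Delta}{2}(1-\tfrac{\epsilon}{2})$, but it is not needed, so calling the conversion ``required'' in your last paragraph contradicts your own correct remark that the naive coordinatewise bound already suffices.
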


\begin{proof}
Let $\pi_{m}(a)=\pi(a \mid o^\ast, h_{m})$. Since $a_{0}^\ast \ne a_{1}^\ast$, the action probabilities under mode 1 satisfy $\pi_{1}(a_{0}^\ast)+\pi_{1}(a_{1}^\ast)\le 1$. The $L_1$ behavioural distance constraint $d(\pi) \le \epsilon$ implies that for any single action $a$, $|\pi_{0}(a)-\pi_{1}(a)|\le \epsilon$. Consequently, for the optimal action $a_0^\ast$, we have $\pi_1(a_0^\ast) \ge \pi_0(a_0^\ast) - \epsilon$.

Substituting this lower bound into the sum constraint yields
\begin{equation}
\pi_0(a_0^\ast) - \epsilon + \pi_1(a_1^\ast) \le 1 
\;\Rightarrow\;
\pi_0(a_0^\ast) + \pi_1(a_1^\ast) \le 1 + \epsilon.
\end{equation}

Using $\min(x,y) \le \frac{x+y}{2}$ gives
\begin{equation}
\min(\pi_0(a_0^\ast), \pi_1(a_1^\ast)) \le \frac{1}{2}(1 + \epsilon).
\end{equation}

Substituting into the value model yields
\begin{equation}
\min_{P} J(\pi \mid P)
\le
V_{\max} - \Delta\!\left(1 - \frac{1+\epsilon}{2}\right)
=
V_{\max} - \frac{\Delta}{2}(1 - \epsilon),
\end{equation}
completing the proof.
\end{proof}

Lemma~\ref{lem:robustness_bound} shows that behavioural distance is not merely descriptive but functionally necessary for robustness under latent prior shift. When $d(\pi)$ is small, the policy cannot simultaneously allocate high probability mass to both mode-specific optimal actions, imposing a ceiling on worst-case return. In the limiting case $d(\pi)=0$, the bound reduces to $\min_P J(\pi\mid P) \le V_{\max} - \Delta/2$.

\subsection{Structural Non-Preservation under Policy Transformation}
\label{sec:structural_fragility_theory}

We now analyse how common policy transformations affect behavioural distance.

\paragraph{Convex Aggregation.}
For $\pi_\alpha = \alpha \pi_1 + (1-\alpha)\pi_2$, linearity implies
\begin{equation}
R_{\pi_\alpha}(p,o^\ast) = \alpha R_{\pi_1}(p,o^\ast) + (1-\alpha)R_{\pi_2}(p,o^\ast).
\end{equation}

\begin{lemma}[Convex Contraction]
\label{lem:convex_contraction}
For $\alpha \in [0,1]$,
\begin{equation}
d(\pi_\alpha) \le \alpha d(\pi_1) + (1-\alpha)d(\pi_2).
\end{equation}
\end{lemma}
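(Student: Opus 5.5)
The plan is to reduce the statement to the triangle inequality for the $L_1$ norm, using the linearity identity displayed just before the lemma. First I write out $d(\pi_\alpha)$ by its definition as the $L_1$ distance between the two response profiles $R_{\pi_\alpha}(p_0,o^\ast)$ and $R_{\pi_\alpha}(p_1,o^\ast)$. Then I substitute the convex decomposition $R_{\pi_\alpha}(p,o^\ast)=\alpha R_{\pi_1}(p,o^\ast)+(1-\alpha)R_{\pi_2}(p,o^\ast)$ for each probe $p\in\{p_0,p_1\}$. Regrouping terms gives
\begin{equation*}
R_{\pi_\alpha}(p_0,o^\ast)-R_{\pi_\alpha}(p_1,o^\ast)
=\alpha\bigl(R_{\pi_1}(p_0,o^\ast)-R_{\pi_1}(p_1,o^\ast)\bigr)+(1-\alpha)\bigl(R_{\pi_2}(p_0,o^\ast)-R_{\pi_2}(p_1,o^\ast)\bigr).
\end{equation*}

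Next I take $\|\cdot\|_1$ of both sides and apply the triangle inequality. Absolute homogeneity of the norm then pulls out the coefficients, since $\alpha\ge 0$ and $1-\alpha\ge 0$ for $\alpha\in[0,1]$. This yields exactly $\alpha d(\pi_1)+(1-\alpha)d(\pi_2)$.

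The only genuine obstacle is justifying the linearity identity itself, which the paper asserts without argument. I would make explicit that $\pi_\alpha$ means the pointwise mixture $\pi_\alpha(\cdot\mid o,h)=\alpha\pi_1(\cdot\mid o,h)+(1-\alpha)\pi_2(\cdot\mid o,h)$. I would also require that the probe-induced internal-state distribution $h\sim p(\pi)$ be the same for $\pi_1$, $\pi_2$ and $\pi_\alpha$, for instance because probes fix the internal state or the history independently of the policy's own actions. Under this shared-state assumption, linearity of expectation gives the identity.

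If instead $p(\pi)$ depends on $\pi$, I would handle it differently. I would interpret $\pi_\alpha$ as a mixture at the level of response profiles: with probability $\alpha$, run $\pi_1$ for the whole episode. With that reading the identity again holds by the law of total expectation, and the remainder of the argument goes through unchanged.
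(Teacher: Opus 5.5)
Your proposal is correct and matches the paper's proof, which likewise writes the mixed probe difference as $\alpha\Delta_1+(1-\alpha)\Delta_2$ and applies the triangle inequality, with homogeneity left implicit. Your added discussion of when the linearity identity $R_{\pi_\alpha}=\alpha R_{\pi_1}+(1-\alpha)R_{\pi_2}$ actually holds makes explicit an assumption the paper only asserts, but it is not a different route. The two cases you give are a shared probe-induced state distribution, or an episode-level mixture.
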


\begin{proof}
Let $\Delta_i = R_{\pi_i}(p_0,o^\ast) - R_{\pi_i}(p_1,o^\ast)$. Then
\begin{equation}
R_{\pi_\alpha}(p_0,o^\ast) - R_{\pi_\alpha}(p_1,o^\ast)
=
\alpha \Delta_1 + (1-\alpha)\Delta_2.
\end{equation}
Applying the triangle inequality yields the claim.
\end{proof}

Define $\mathcal{E} = \{\pi \mid d(\pi) > 0\}$, where $d(\pi)$ is defined with respect to the fixed probe pair $(p_0,p_1)$.

\begin{proposition}[Non-Closure under Convex Aggregation]
\label{prop:nonclosure_epistemic}
The set $\mathcal{E}$ is not closed under convex combination.
\end{proposition}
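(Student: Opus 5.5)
Non-closure only needs one counterexample: two policies $\pi_1,\pi_2\in\mathcal{E}$ and some $\alpha\in(0,1)$ with $d(\pi_\alpha)=0$. Lemma~\ref{lem:convex_contraction} shows that $d$ can only contract under mixing, but that inequality cannot give equality to zero. So I will use the linearity identity from its proof directly and pick response profiles whose probe differences cancel.

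Concretely, take a two-action space $\mathcal{A}=\{a,b\}$. Let the probes $p_0,p_1$ induce internal states $h_0,h_1$ at $o^\ast$; for simplicity take each probe to be a point mass. Define
\[
\pi_1(\cdot\mid o^\ast,h_0)=\delta_a,\qquad \pi_1(\cdot\mid o^\ast,h_1)=\delta_b,
\]
and let $\pi_2$ be the mirror image:
\[
\pi_2(\cdot\mid o^\ast,h_0)=\delta_b,\qquad \pi_2(\cdot\mid o^\ast,h_1)=\delta_a.
\]
Then $\Delta_1=R_{\pi_1}(p_0,o^\ast)-R_{\pi_1}(p_1,o^\ast)=(1,-1)$, so $d(\pi_1)=2>0$. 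Likewise $\Delta_2=(-1,1)$, so $d(\pi_2)=2>0$. Both policies therefore lie in $\mathcal{E}$.

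Now take $\alpha=1/2$. By the linearity identity used in the proof of Lemma~\ref{lem:convex_contraction},
\[
R_{\pi_{1/2}}(p_0,o^\ast)-R_{\pi_{1/2}}(p_1,o^\ast)=\tfrac12\Delta_1+\tfrac12\Delta_2=0 .
\]
Hence $d(\pi_{1/2})=0$, so $\pi_{1/2}\notin\mathcal{E}$, which proves the proposition. In fact $\pi_{1/2}(\cdot\mid o^\ast,h)$ is uniform for both $h$: the mixture is a memoryless policy at $o^\ast$ and loses all behavioural dependency there.

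The one point that needs care is that the mixture acts on the same internal-state space. I will check that $\pi_\alpha$ is defined pointwise, $\pi_\alpha(\cdot\mid o,h)=\alpha\pi_1(\cdot\mid o,h)+(1-\alpha)\pi_2(\cdot\mid o,h)$, with $\pi_1$ and $\pi_2$ sharing $\mathcal{H}$ and with the probes inducing the same state distributions under all three policies. This is exactly what the paper's identity $R_{\pi_\alpha}=\alpha R_{\pi_1}+(1-\alpha)R_{\pi_2}$ assumes. Imposing it in the construction, by fixing the internal-state update and varying only the action head, makes the example legitimate. Otherwise the argument is immediate.
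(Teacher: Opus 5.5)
Your proof is correct and is essentially the paper's argument. The paper takes an arbitrary $\pi_1\in\mathcal{E}$, builds $\pi_2$ by swapping its two probe responses so that the probe differences are anti-aligned, and sets $\alpha=\tfrac12$; your mirror construction with $\delta_a,\delta_b$ on two actions is the concrete instance of this. Your requirement that $\pi_1$, $\pi_2$ and $\pi_{1/2}$ share the probe-induced state distributions (fixed state update, only the action head varies) makes explicit an assumption that the paper's identity $R_{\pi_\alpha}=\alpha R_{\pi_1}+(1-\alpha)R_{\pi_2}$ uses without stating, and it also guarantees that the swapped policy actually exists.
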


\begin{proof}
Let $\pi_1 \in \mathcal{E}$ with distinct response distributions $q_0 = R_{\pi_1}(p_0, o^\ast)$ and $q_1 = R_{\pi_1}(p_1, o^\ast)$. Construct $\pi_2$ by swapping these responses so that $R_{\pi_2}(p_0, o^\ast)=q_1$ and $R_{\pi_2}(p_1, o^\ast)=q_0$. Then $d(\pi_2)>0$. Their differences are anti-aligned:
\begin{equation}
R_{\pi_1}(p_0,o^\ast) - R_{\pi_1}(p_1,o^\ast)
=
-
\big(
R_{\pi_2}(p_0,o^\ast) - R_{\pi_2}(p_1,o^\ast)
\big).
\end{equation}
For $\alpha=\tfrac12$, $d(\pi_{1/2})=0$, so $\pi_{1/2}\notin\mathcal{E}$.
\end{proof}

\paragraph{Gradient Transformation.}

We now analyse how gradient-based transformations affect behavioural distance. The following result is conditional and local. It does not assert that reward optimisation universally decreases behavioural distance; rather, it isolates a sufficient geometric condition under which contraction must occur.

\begin{theorem}[Conditional Local Contraction Under Gradient Alignment]
\label{thm:gradient_erosion}
For parameters $\theta$, assume $\nabla_{\theta}d(\pi_{\theta})\ne0$ and define
\[
v_{d}
=
-\frac{\nabla_{\theta}d(\pi_{\theta})}
{\|\nabla_{\theta}d(\pi_{\theta})\|_{2}}.
\]
Let $J(\theta)=(1-\delta)J_{0}(\theta)+\delta J_{1}(\theta)$ with $\delta\in(0,0.5)$. 

If $\nabla J_{0}(\theta)^{\top}v_{d}\ge k>0$ and $\|\nabla J_{1}(\theta)\|_{2}\le L$, then whenever
\[
\delta<\frac{k}{k+L},
\]
we have $\nabla J(\theta)^{\top}v_{d}>0$, implying local decrease of $d(\pi_\theta)$ under gradient ascent.
\end{theorem}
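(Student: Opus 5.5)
The plan is to expand $\nabla J(\theta)^{\top}v_d$ by linearity of the gradient, bound the dominant-mode term from below using the alignment hypothesis, and bound the minority-mode term from above using Cauchy--Schwarz. We then read off the threshold on $\delta$, and finally interpret a positive inner product with $v_d$ as a first-order decrease of $d(\pi_\theta)$ along the ascent direction.

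Since $J=(1-\delta)J_0+\delta J_1$, we have
\[
\nabla J(\theta)^{\top}v_d=(1-\delta)\,\nabla J_0(\theta)^{\top}v_d+\delta\,\nabla J_1(\theta)^{\top}v_d .
\]
The first term is at least $(1-\delta)k$, because $1-\delta>0$ and $\nabla J_0(\theta)^{\top}v_d\ge k$ by assumption. For the second term, $\|v_d\|_2=1$ by construction (well-defined since $\nabla_\theta d(\pi_\theta)\neq 0$), so Cauchy--Schwarz gives $|\nabla J_1(\theta)^{\top}v_d|\le L$. Hence the second term is at least $-\delta L$. Combining,
\[
\nabla J(\theta)^{\top}v_d\ge (1-\delta)k-\delta L = k-\delta(k+L),
\]
which is strictly positive exactly when $\delta<k/(k+L)$. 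Note that $k/(k+L)\le 1$, and the hypothesis $\delta<0.5$ plays no role beyond ensuring $1-\delta>0$.

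For the interpretation, a gradient ascent step is $\theta'=\theta+\eta\nabla J(\theta)$. A first-order Taylor expansion gives
\[
d(\pi_{\theta'})-d(\pi_\theta)=\eta\,\nabla_\theta d^{\top}\nabla J+o(\eta)=-\eta\|\nabla_\theta d\|_2\,\nabla J^{\top}v_d+o(\eta).
\]
This is strictly negative for all sufficiently small $\eta>0$. We will assume $d(\pi_\theta)$ is differentiable at $\theta$, as the statement implicitly does by writing $\nabla_\theta d$.

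The main obstacle is this final step. The inequality $\nabla J^{\top}v_d>0$ only gives a local, first-order decrease. Moreover, $d$ is an $L_1$ norm and is therefore nondifferentiable wherever the two response profiles agree in some action coordinate. We will state the conclusion as a statement about the directional derivative at a point of differentiability, valid for small enough step sizes, and not as a decrease for arbitrary steps.
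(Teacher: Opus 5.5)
Your proposal is correct and follows the paper's proof. Both expand $\nabla J(\theta)^{\top}v_d$ by linearity, lower-bound the $J_0$ term by $(1-\delta)k$, and use Cauchy--Schwarz on the $J_1$ term to obtain $(1-\delta)k-\delta L>0$. Your use of $|\nabla J_1(\theta)^{\top}v_d|\le L$ gives the lower bound $-\delta L$ that the argument actually needs, whereas the paper writes only the upper bound $\nabla J_1(\theta)^{\top}v_d\le L$. Your first-order Taylor step also makes explicit the ``local decrease'' interpretation (including the differentiability caveat) that the paper leaves implicit.
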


\begin{proof}
\[
\nabla J(\theta)^{\top}v_{d}
=
(1-\delta)\nabla J_{0}(\theta)^{\top}v_{d}
+
\delta\nabla J_{1}(\theta)^{\top}v_{d}.
\]
Using the alignment bound $\nabla J_0(\theta)^\top v_d \ge k$ and Cauchy-Schwarz with $\|v_d\|_2=1$, we have $\nabla J_1(\theta)^\top v_d \le \|\nabla J_1(\theta)\|_2 \le L$. Therefore:
\[
\nabla J(\theta)^{\top}v_{d}
\ge
(1-\delta)k - \delta L,
\]
which is positive when $(1-\delta)k > \delta L$, i.e., when $\delta<\frac{k}{k+L}$.
\end{proof}

The theorem provides a sufficient local condition under which reward optimisation contracts behavioural distance. When the alignment condition holds under skewed priors, gradient ascent reduces probe-conditioned separation even if dominant-prior reward remains stable. Whether such alignment arises depends on task structure, representation geometry, and optimisation dynamics.

\section{Structural Non-Preservation: Formal and Empirical Witnesses}
\label{sec:Fragility}

We provide minimal formal constructions and controlled empirical witnesses designed to isolate the structural mechanisms identified in Section~\ref{sec:structural_fragility_theory}. These experiments are not intended as benchmark evaluations, but as diagnostic settings in which the theoretical conditions can be directly measured.

\subsection{Sequential Degradation and Robustness under Prior Shift}
\label{sec:seqrobustprior}

We consider a partially observable gridworld to examine structural degradation in a sequential setting (Figure~\ref{fig:behavioural-sketch}). An agent may execute a locally suboptimal diagnostic probe that reveals a latent mode $m \in \{0,1\}$ determining the rewarding goal location. We evaluate three policies: a Probing policy ($d=2$), a Shortcut policy that ignores the probe ($d=0$), and an Aggregated policy formed by majority voting ($d=1$).

\begin{figure}[t]
    \centering
    \includegraphics[width=\linewidth]{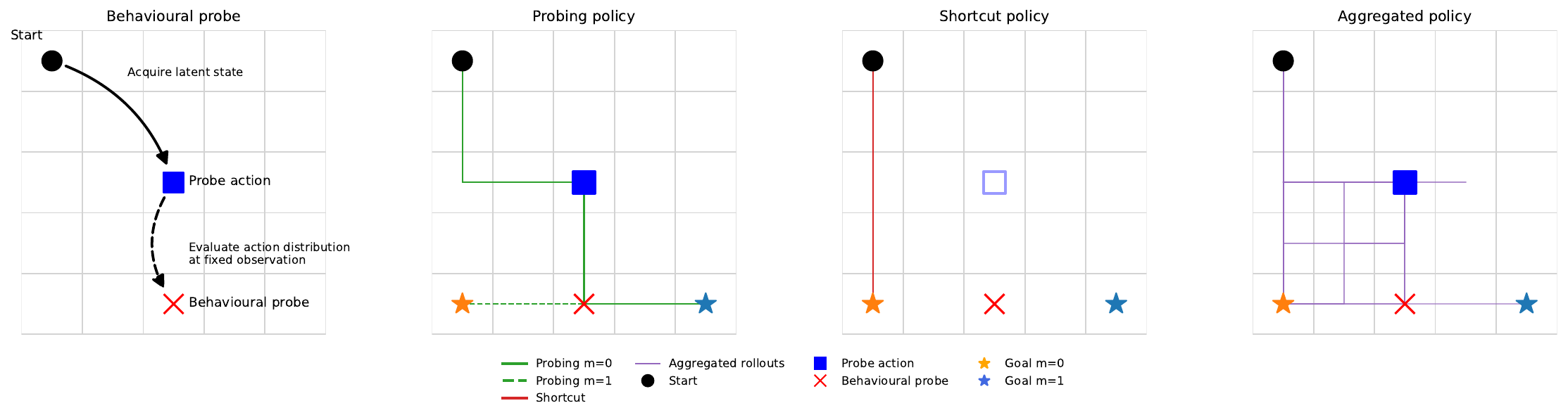}
    \caption{
    Behavioural probe evaluation and representative trajectories in the partially observable gridworld. 
    \textbf{Left:} The evaluation protocol. The agent acquires latent mode information to induce an internal hidden state $h_m$. It is subsequently evaluated at a fixed observation $o^\star$ to measure the conditional policy $\pi(\cdot \mid o^\star, h_m)$. 
    \textbf{Second:} Probing policy. The agent maintains separated internal representations and successfully executes distinct actions contingent on the initial mode. 
    \textbf{Third:} Shortcut policy. The agent consistently navigates toward the same goal irrespective of the latent mode. 
    \textbf{Right:} Aggregated policy. The agent may execute the probing action, but it fails to exhibit the consistent information-conditioned action differences of the probing agent.
    }
    \label{fig:behavioural-sketch}
\end{figure}

To isolate structural robustness, we evaluate these policies under a biased prior $\mathbb{P}(m=0)=0.9$ and a reversed prior $\mathbb{P}(m=0)=0.1$ without additional training. We also evaluate convex mixtures $\pi_\alpha = \alpha \pi_{\text{probe}} + (1-\alpha)\pi_{\text{shortcut}}$ across both priors. Experiments are run over 300 episodes.

\begin{figure}[htbp]
\centering
\begin{minipage}{0.48\linewidth}
\centering
\small
\begin{tabular}{lcc}
Policy & Biased Prior & Reversed Prior \\
\midrule
Probing    & $0.810 \pm 0.000$ & $0.810 \pm 0.000$ \\
Shortcut   & $0.863 \pm 0.017$ & $0.060 \pm 0.019$ \\
Aggregated & $0.710 \pm 0.023$ & $0.000 \pm 0.020$ \\
\end{tabular}
\end{minipage}
\hfill
\begin{minipage}{0.48\linewidth}
\centering
\includegraphics[width=\linewidth]{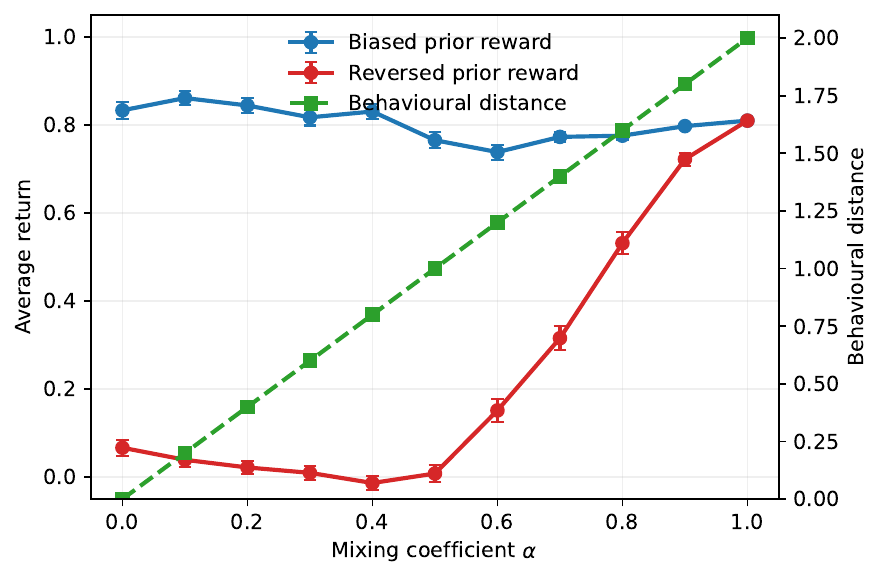}
\end{minipage}
\caption{
\textbf{Robustness under latent prior shift.}
\textbf{Left:} Average return of three policies evaluated under a biased prior and a reversed prior. The Shortcut policy attains high return under the biased prior despite zero behavioural distance, but degrades sharply under prior reversal. The Probing policy remains stable across priors ($mean \pm se$).
\textbf{Right:} Convex mixtures $\pi_\alpha = \alpha \pi_{\text{probe}} + (1-\alpha)\pi_{\text{shortcut}}$. Behavioural distance decreases linearly with $\alpha$, and robustness under prior shift decreases correspondingly, indicating that sensitivity to latent distribution shift is governed by probe-conditioned behavioural separation and not by biased-prior reward alone.
}
\label{fig:prior_shift_combined}
\end{figure}

As shown in Figure~\ref{fig:prior_shift_combined}, the Shortcut policy attains high return under the biased prior despite zero behavioural distance, but degrades sharply under prior reversal. The Probing policy remains stable across priors. Across convex mixtures, robustness under prior shift scales approximately with behavioural distance rather than biased-prior return. This illustrates that reward statistics under a dominant prior do not necessarily reflect structural robustness to latent distribution shift.

\subsection{Optimisation-Induced Structural Erosion}
\label{sec:optErosion}

We examine whether epistemic behavioural structure, once established, is preserved under continued gradient-based optimisation when the training objective becomes distributionally biased. Unlike the aggregation experiments, no explicit policy mixing is applied; the only transformation is reward-driven optimisation under a skewed latent prior.

The task consists of three phases: a probe revealing a latent binary mode $m \in \{0,1\}$, a delay phase with distractors independent of $m$, and an evaluation step at a fixed observation $o^\star$ containing no information about $m$. Correct action at $o^\star$ therefore requires retention of probe information in recurrent state.

Recurrent Advantage Actor–Critic (A2C) policies are first trained under a uniform prior ($P(m=0)=0.5$), yielding probe-conditioned separation. Optimisation then continues under a biased prior ($P(m=0)=0.98$). We measure return under biased and reversed priors, behavioural distance $d(\pi)$ at $o^\star$, hidden-state separation quantified by both $\| h_{m=0} - h_{m=1} \|_2$ and its normalised variant. Further, we measure the \textbf{majority force} ($Proj_0 = \nabla J_0(\theta)^\top \mathbf{v}_d$) and \textbf{minority force} ($Proj_1 = \nabla J_1(\theta)^\top \mathbf{v}_d$) as the projections of mode-specific gradients onto the structural contraction vector $\mathbf{v}_d$, the weighted sum of which constitutes the \textbf{net structural force} $(1 - \delta)Proj_0 + \delta Proj_1$ that determines the direction of representational erosion. Results are averaged over 10 seeds (Figure~\ref{fig:optimisation_erosion}). An extended description of the experimental setup and experimental evaluation over different prior shifts as well as network size, is presented in supplementary material (Appendix~\ref{app: experimental_details}).

Under biased optimisation, return on the dominant prior remains near-optimal while performance under the reversed prior degrades. Behavioural distance $d(\pi)$ contracts during the biased phase and stabilises at a reduced plateau. Absolute and relative hidden-state separation decrease in parallel, indicating deformation of internal representation geometry alongside behavioural attenuation.

\begin{figure}[t]
    \centering
    \includegraphics[width=\linewidth]{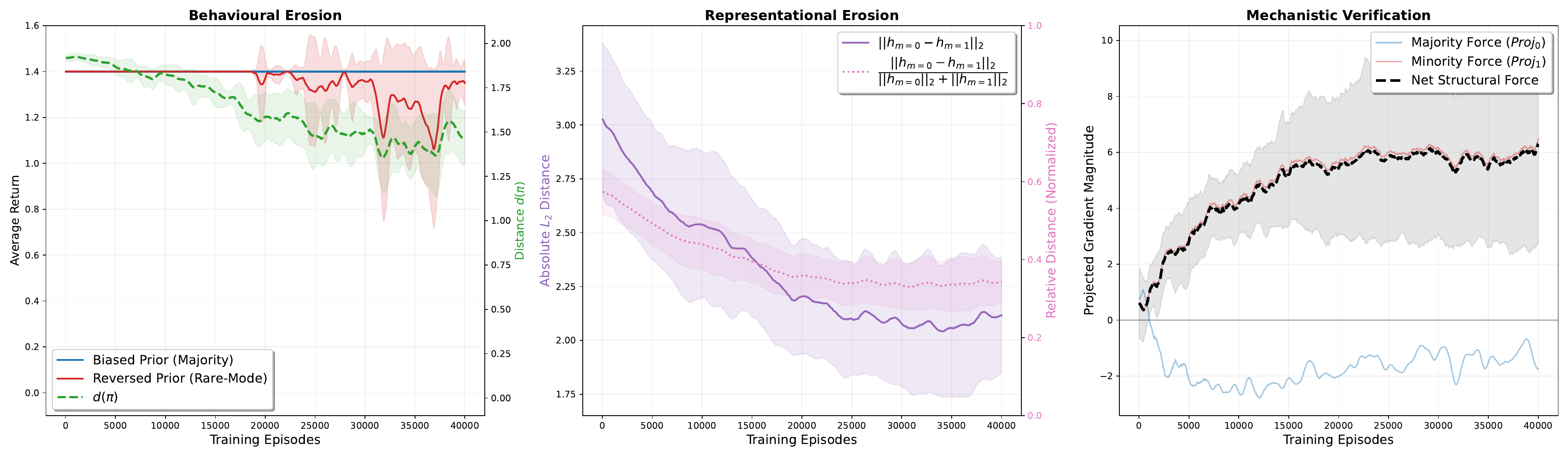}
    \caption{
    \textbf{Optimisation-induced structural erosion under a heavily biased prior (98\%).}
    \textbf{Left: Behavioural erosion.} Return under the dominant prior remains stable while reversed-prior performance degrades. Behavioural distance decreases during biased optimisation and stabilises at a lower value.
    \textbf{Middle: Representational erosion.} Absolute and normalised hidden-state separation contract over training.
    \textbf{Right: Mechanistic verification.} Projection of task gradients onto $\mathbf{v}_d = -\nabla d / \|\nabla d\|$ yields a persistent positive net structural force under the biased prior, consistent with Theorem~\ref{thm:gradient_erosion}. Shaded regions denote $\pm$ one standard deviation across 10 seeds.
    }
    \label{fig:optimisation_erosion}
\end{figure}

The projected structural force remains positive throughout the erosion regime, indicating local alignment between reward optimisation and contraction of behavioural distance. Although contraction slows as $d(\pi)$ decreases, the persistent alignment is consistent with the sufficient condition in Theorem~\ref{thm:gradient_erosion}. 

These results show that stability of reward under a dominant prior does not ensure preservation of probe-conditioned structure. Continued optimisation under skewed objectives can attenuate epistemic behavioural dependency even without explicit aggregation. These observations motivate analysing preservation at the level of probe-conditioned behavioural equivalence classes, which treats them as the units of preservation under policy transformation.

\section{Behavioural Representation and Preservation: An Interpretive Lens}
\label{sec:brp}

The preceding analysis suggests viewing probe-conditioned behavioural equivalence classes as units of structural comparison under policy transformation. We refer to this perspective as Behavioural Representation and Preservation (BRP). BRP is not proposed as a formal framework or algorithmic prescription. It provides a lens for evaluating whether reinforcement learning transformations preserve probe-conditioned behavioural structure.

\subsection{Analytical Dimensions of Preservation}

Under the BRP lens, preservation of probe-conditioned behavioural structure can be analysed along three dimensions. These are descriptive criteria for evaluating how a transformation affects behavioural equivalence classes.

\paragraph{Identifiability.}
Identifiability concerns whether distinct information-conditioned strategies are distinguishable under the chosen probe set $P$. A probe set induces identifiable structure if behaviourally distinct policies exhibit sufficiently separated response profiles (i.e., $d(\pi) \gg 0$ relative to tolerance $\varepsilon$). If probes fail to induce measurable separation, equivalence classes collapse trivially, and preservation cannot be meaningfully assessed. Identifiability therefore depends jointly on probe design and task structure.

\paragraph{Retention.}
Retention concerns whether a transformation $T$ preserves probe-conditioned response profiles. Given a policy $\pi$ and transformed policy $T(\pi)$, retention can be evaluated by measuring whether $\pi$ and $T(\pi)$ remain within the same $\epsilon$-behavioural equivalence class under the chosen probes. As shown in Section~\ref{sec:structural_fragility_theory}, common transformations such as convex aggregation and gradient updates under skewed priors do not generally guarantee such invariance. Retention is thus a property of the transformation relative to a probe set, not of parameter similarity or reward stability.

\paragraph{Transferability.}
Transferability concerns whether probe-conditioned behavioural structure can be reproduced across policies, training regimes, or environments. Two policies trained under different initialisations or optimisation dynamics may belong to the same $\epsilon$-equivalence class even if their internal representations differ. Transferability therefore focuses on reproducibility of response profiles. Under this view, preservation across agents or training phases amounts to re-instantiating the same probe-relative behavioural class.

\subsection{Preservation Criterion}

Under this lens, epistemic competence corresponds to stability of probe-conditioned response profiles under transformation $T$. A transformation is preservation-consistent if it is approximately $\epsilon$-equivalence-preserving with respect to the chosen probe set. BRP therefore provides a structural criterion for evaluating whether reinforcement learning methods retain information-conditioned interaction patterns across optimisation, aggregation, or compression.

BRP does not prescribe a specific optimisation method. Instead, it suggests that claims about preservation of epistemic behaviour should be evaluated at the level of probe-conditioned response profiles and not solely through reward statistics. High reward under a dominant prior is insufficient to establish preservation of behavioural dependency.

\section{Discussion and Conclusion}

This paper analysed the preservation of epistemic behaviour under common reinforcement learning transformations. By formalising probe-relative behavioural dependency and $\epsilon$-behavioural equivalence, we showed that convex aggregation does not preserve non-trivial dependency, and that gradient-based optimisation under skewed priors can locally contract behavioural distance under a sufficient alignment condition. Empirically, controlled experiments verify these mechanisms, demonstrating that behavioural erosion precedes degradation under latent prior shift even when dominant-prior reward remains near-optimal. The observed gradient-induced erosion shares mechanistic similarities with `loss of plasticity' in continual learning \citep{dohare2024, lyle2021understanding}, though our framework formalises this degradation specifically at the level of information-conditioned interaction.

Behavioural Representation and Preservation (BRP) provides an interpretive lens that treats $\epsilon$-equivalence classes as units of structural comparison. Open questions remain, including characterising when the gradient-alignment condition arises across architectures and designing scalable probes in high-dimensional environments. Together, these results introduce a diagnostic vocabulary to evaluate whether future algorithmic interventions successfully retain information-conditioned behaviour.


\appendix

\bibliography{main}
\bibliographystyle{rlj}


\beginSupplementaryMaterials

\section{Extended Discussion: Structural Vulnerabilities in RL Pipelines}
\label{app:vulnerabilities}

Epistemic behaviour arises when action selection depends on internally accumulated information rather than solely on immediate observation. As established in Section \ref{sec:behavioural_equivalence}, because such dependency is not represented explicitly in standard reinforcement learning pipelines, common transformations are not guaranteed to preserve $\epsilon$-behavioural equivalence. 

The mechanisms below illustrate exactly how the mathematical vulnerabilities identified in Section \ref{sec:structural_fragility_theory} (dilution, non-closure, and gradient erosion) actively manifest across modern reinforcement learning paradigms.

\subsection{Implicitness and Optimisation-Induced Collapse in Meta-RL}
Information-conditioned behaviour is typically realised through parameters, latent activations, or recurrent dynamics trained exclusively for reward maximisation. Meta-reinforcement learning systems such as $RL^2$ \citep{duan2016rl2} and latent-context approaches including PEARL and VariBAD \citep{rakelly2019pearl,zintgraf2021variBad} condition actions on accumulated interaction history or inferred task variables, thus instantiating behavioural dependency. 

However, because optimisation updates operate on representations rather than behavioural equivalence classes, these methods are susceptible to the continuous structural erosion described by Theorem~\ref{thm:gradient_erosion}. When the task distribution (latent prior) is skewed or exhibits curriculum shifts, the gradient update $T_{\text{grad}}$ acts as an equivalence-breaking transformation under the stated prior conditions, inducing a tendency toward reduction of behavioural distance even when short-horizon reward remains high. Consequently, information-conditioned distinctions that support rapid adaptation can be attenuated during continued training once the minority tasks are rarely sampled, leading to silent losses in meta-generalisation even if average reward remains stable.

\subsection{Aggregation and Behavioural Homogenisation}
Aggregation procedures, such as parameter averaging, policy distillation, and logit ensembling, operate on parameters or action distributions and do not enforce the preservation of response profiles. World-model-based agents such as Dreamer and MuZero \citep{ha2018worldmodels,hafner2020dreamerv2,schrittwieser2020muzero} rely on latent predictive states or internal search processes to guide action selection. 

When such policies are compressed or aggregated, short-horizon reward performance may remain similar while distinctions in probe-relative response profiles are reduced. In distributed or federated reinforcement learning, periodic averaging treats deviations across agents as noise, even when those deviations reflect distinct information-conditioned strategies. As shown by Lemma~\ref{lem:convex_contraction} (Convex Contraction) and Proposition~\ref{prop:nonclosure_epistemic} (Non-Closure), convex aggregation can break equivalence class membership and cannot increase behavioural distance beyond the weighted average. Because epistemic structure is at most diluted under convex combination, this aggregation systematically homogenises and can eliminate structured epistemic behaviour across the population.

\subsection{Temporal Erosion of Structured Exploration}

Exploration strategies driven by novelty, prediction error, or uncertainty estimates depend on internally accumulated signals and therefore instantiate behavioural dependency \citep{pathak2017curiosity,bellemare2016cts,houthooft2016vime}. Empirical studies report that exploratory behaviour often diminishes as predictive models improve or uncertainty estimates contract \citep{burda2019largecuriosity}.

The gradient mechanism formalised in Theorem~\ref{thm:gradient_erosion} provides one structural explanation for this phenomenon. When optimisation becomes dominated by a subset of modes or states, reward gradients may align with directions that reduce distinctions no longer directly contributing to expected return significantly. In such regimes, the projected gradient onto the contraction direction $\mathbf{v}_d$ can become positive, inducing attenuation of behavioural dependency even while reward remains stable.

We do not claim that exploration collapse is universal or inevitable. The present lens identifies a sufficient structural condition under which internally encoded exploratory distinctions may erode during continued reward-driven optimisation. Absent explicit constraints that preserve $\epsilon$-behavioural equivalence classes, exploration-related representations can be gradually attenuated when they cease to provide immediate reward advantage.

\section{Supplemental Material: Experimental Details}
\label{app: experimental_details}
\subsection{Environment Specification: The Abstract Epistemic MDP}
The Abstract Epistemic Environment is a partially observable MDP designed to force reliance on internal memory by separating critical information from reward-bearing observations. The environment consists of three temporal phases:

\begin{itemize}
\item \textbf{Probe Phase ($t=0$):} A latent binary mode $m \in \{0,1\}$ is revealed via a dedicated bit in the 5D observation vector $[\,\text{is\_probe},\,\text{is\_delay},\,\text{is\_eval},\,\text{distractor},\,\text{mode\_bit}\,]$.
\item \textbf{Delay Phase ($1 \le t < T$):} The agent must match random distractor bits to survive, receiving a $+0.1$ reward per step. Failure to match results in immediate termination ($-1.0$ reward).
\item \textbf{Evaluation Phase ($t=T$):} The agent receives a fixed ``Zero Info'' observation $o^\ast = [0,0,1,0,0]$. It must output the action matching the initial mode $m$ to receive a $+1.0$ reward.
\end{itemize}

The environment is shown schematically in figure~\ref{fig:bottleneck}.

\begin{figure}[h]
\centering
\includegraphics[width=0.5\textwidth]{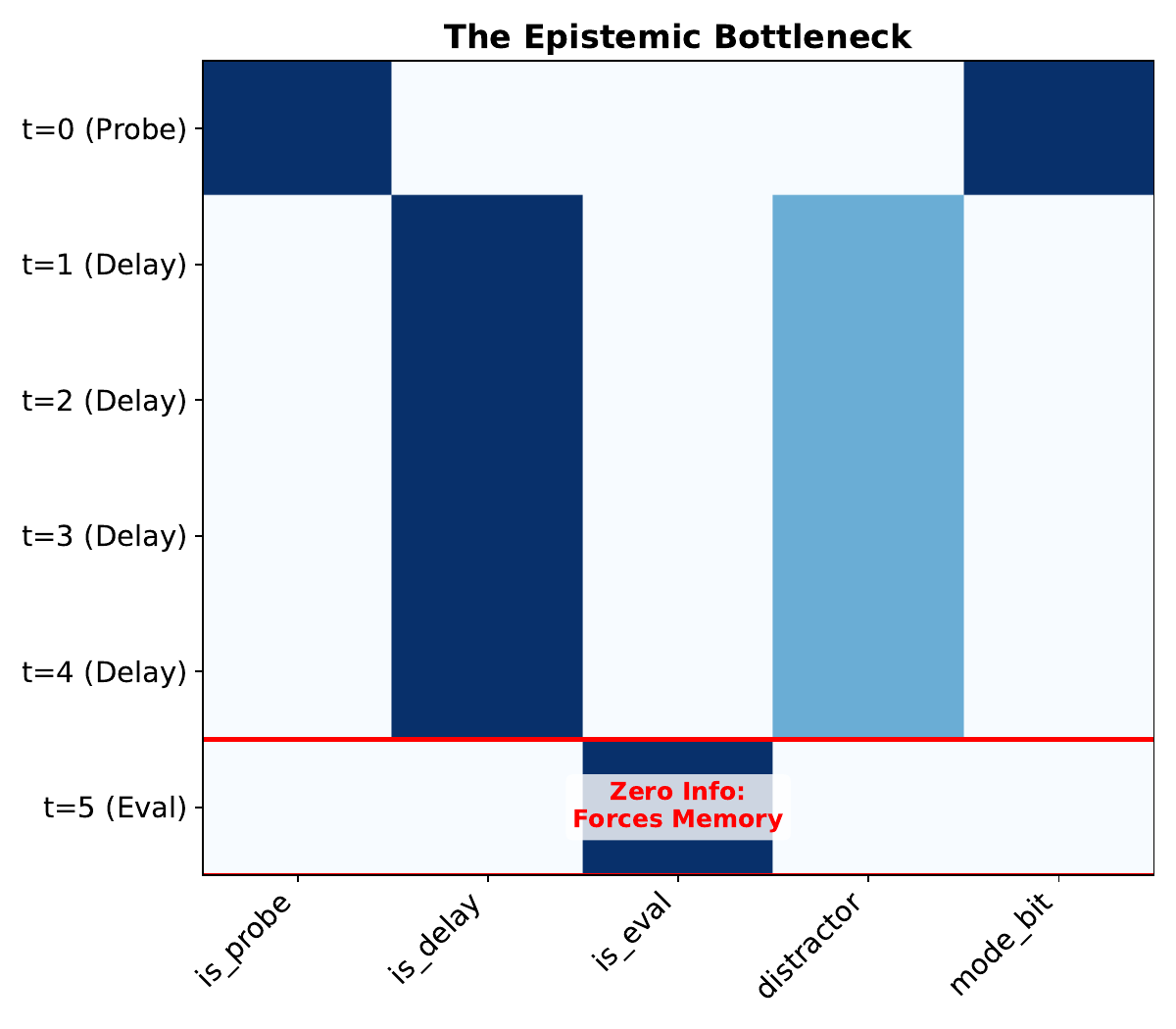}
\caption{
Abstract Epistemic MDP. A three-phase partially observable task consisting of a probe phase revealing latent mode $m$, a distractor delay phase independent of $m$, and a final evaluation phase at a fixed observation $o^\ast$ that requires retention of probe information. The task enforces an epistemic bottleneck between information acquisition and reward.
}
\label{fig:bottleneck}
\end{figure}

\subsection{Optimisation and Hyperparameters}
Structural erosion is observed using a two-stage training protocol. In \textbf{Stage 1 (Synthesis)}, policies are trained under a uniform prior ($\delta=0.5$) using Advantage Actor--Critic (A2C) until the behavioural distance $d(\pi) > 1.90$. To encourage epistemic competence (by recalling the latent mode in the hidden state), the agent receives a reward of $+5$ upon successfully reaching the goal. In \textbf{Stage 2 (Erosion)}, optimisation continues under biased priors ($\delta \in \{0.6,\dots,0.98\}$) and $+1$ reward for reaching the goal.

\begin{table}[h]
\centering
\caption{Hyperparameters for Stage 2 (Erosion) Experiments}
\label{tab:params}
\begin{tabular}{ll}
\hline
\textbf{Parameter} & \textbf{Value} \\
\hline
Optimizer & Adam \\
Learning Rate & $2 \times 10^{-4}$ \\
Weight Decay & $1 \times 10^{-3}$ \\
Batch Size & 32 episodes \\
Discount Factor ($\gamma$) & 0.99 \\
Entropy Coefficient & 0.02 \\
\hline
\end{tabular}
\end{table}

\subsection{Sensitivity to Prior Skew}
Empirical results across 10 independent seeds (Table~\ref{tab:delta_metrics}) indicate a phase transition toward functional degradation as the prior skew exceeds $\delta = 0.95$. At $\delta = 0.965$, behavioural variance increases markedly (0.449), suggesting the onset of structural instability. As shown in Figure~\ref{fig:sensitivity}, behavioural distance decreases systematically with increasing prior skew, and extreme bias ($\delta = 0.98$) is associated with functional degradation and reduced $d(\pi)$. The decay of internal representation geometry precedes observable functional degradation, indicating that structural contraction serves as an early signal of robustness failure.

\begin{table}[h]
\centering
\caption{Final Epistemic Metrics across Prior Skew ($\delta$) Sweep ($mean \pm se$, 10 seeds)}
\label{tab:delta_metrics}
\begin{tabular}{lcc}
\hline
\textbf{Prior Skew ($\delta$)} & \textbf{Final Behavioural Distance $d(\pi)$} & \textbf{Final Hidden Distance $h_{\text{dist}}$} \\
\hline
$\delta=0.600$ & $1.939 \pm 0.010$ & $2.809 \pm 0.153$ \\
$\delta=0.800$ & $1.911 \pm 0.027$ & $2.714 \pm 0.208$ \\
$\delta=0.900$ & $1.853 \pm 0.039$ & $2.582 \pm 0.153$ \\
$\delta=0.950$ & $1.743 \pm 0.092$ & $2.443 \pm 0.141$ \\
$\delta=0.965$ & $1.517 \pm 0.449$ & $2.133 \pm 0.426$ \\
$\delta=0.980$ & $1.369 \pm 0.238$ & $2.050 \pm 0.203$ \\
\hline
\end{tabular}
\end{table}

\begin{figure}[h]
\centering
\includegraphics[width=\textwidth]{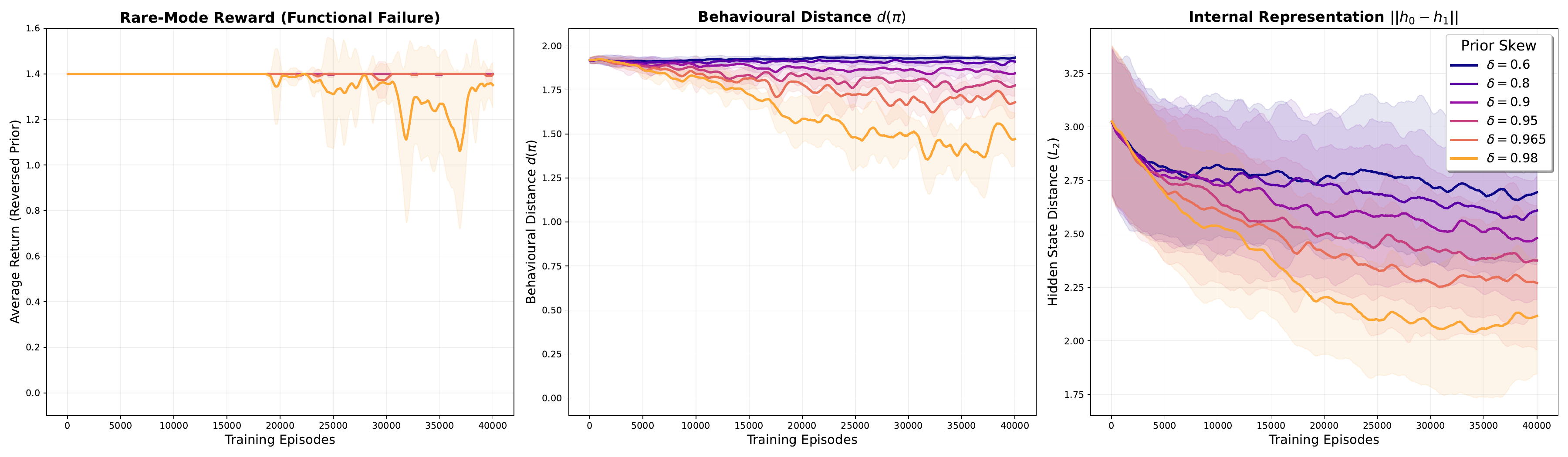}
\caption{\textbf{Sensitivity to Prior Skew.} (Left) Functional degradation on the rare mode most pronounced at $\delta=0.98$. (Center) Behavioural distance $d(\pi)$ decreasing as skew increases. (Right) Decay of internal representation geometry across biased priors, preceding functional failure.}
\label{fig:sensitivity}
\end{figure}

\subsection{Effect of Network Capacity under Extreme Skew}
To test whether erosion is an artifact of limited architectural capacity, we swept the hidden dimension $H \in \{32,64,128\}$ under a fixed extreme bias ($\delta=0.98$). Final values are summarized in Table~\ref{tab:capacity_metrics}.

Increasing recurrent capacity does not prevent behavioural erosion under extreme skew. While the $H=128$ model begins with a larger initial representational separation, it exhibits greater variability during training compared to smaller models (Figure~\ref{fig:capacity_results}).

\begin{table}[h]
\centering
\caption{Final Epistemic Metrics by Network Capacity ($H$) at $\delta=0.98$ ($mean \pm se$, 10 seeds)}
\label{tab:capacity_metrics}
\begin{tabular}{lcc}
\hline
\textbf{Hidden Units ($H$)} & \textbf{Final Behavioural Distance $d(\pi)$} & \textbf{Final Hidden Distance $h_{\text{dist}}$} \\
\hline
$32$  & $1.350 \pm 0.171$ & $2.025 \pm 0.205$ \\
$64$  & $1.422 \pm 0.437$ & $2.357 \pm 0.414$ \\
$128$ & $1.266 \pm 0.618$ & $2.565 \pm 0.607$ \\
\hline
\end{tabular}
\end{table}

\begin{itemize}
\item \textbf{Functional instability:} Higher capacity ($H=128$) induces oscillatory rare-mode reward, consistent with repeated partial recovery and subsequent attenuation of minority representations.
\item \textbf{Capacity-invariant erosion:} Despite a $4\times$ capacity increase, all models converge to similarly reduced behavioural distances ($d(\pi) \approx 1.3$).
\item \textbf{Variance scaling:} Larger networks exhibit higher variance in final $d(\pi)$ without mitigating structural erosion.
\end{itemize}

\begin{figure}[h]
\centering
\includegraphics[width=\textwidth]{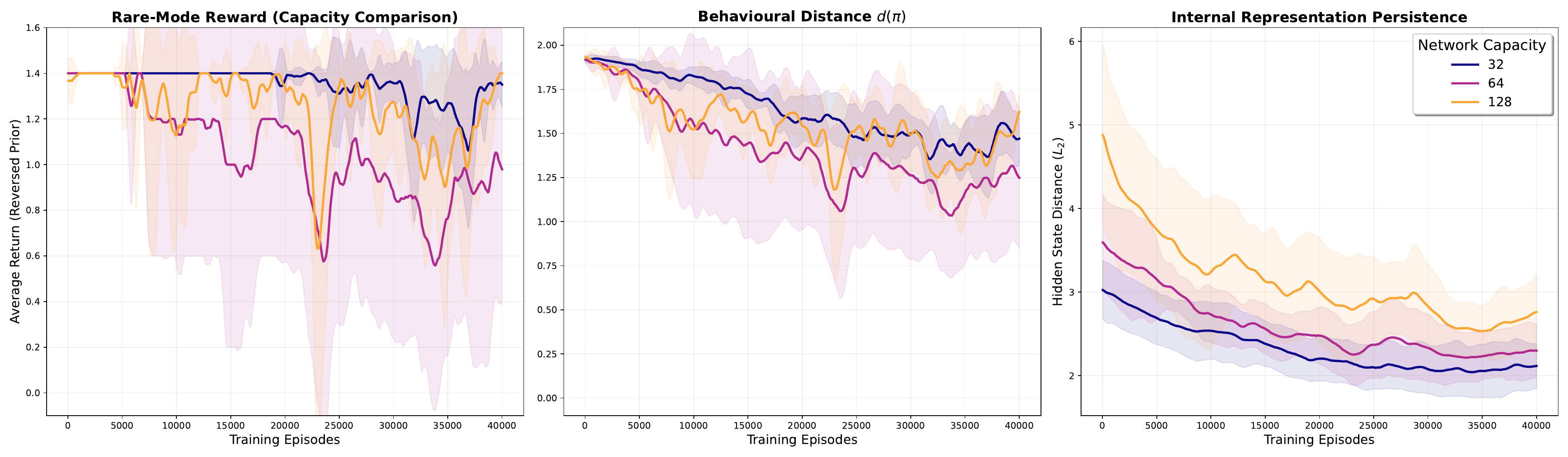}
\caption{\textbf{Capacity Sweep Results.} (Left) Rare-mode reward exhibits increased instability at $H=128$. (Center) Behavioural erosion remains comparable across network sizes. (Right) Representational distance decays across all capacities under extreme skew.}
\label{fig:capacity_results}
\end{figure}

\subsection{Implications for Structural Preservation}

These results empirically witness the theoretical vulnerabilities identified by the BRP lens, particularly for policy gradient methods. In our A2C experiments and in the analysed setting, gradient-driven optimisation under sufficiently skewed priors attenuates probe-conditioned behavioural separation, eroding $d(\pi)$ regardless of architectural capacity. While overparameterised networks can theoretically maintain dual-mode representations, the strong gradient bias actively disfavors this. This echoes recent findings in the continual learning literature where excess capacity fails to prevent loss of plasticity \citep{dohare2024, lyle2021understanding}. Although alternative learning dynamics, such as value-based methods, may exhibit different structural sensitivities, these findings indicate that excess capacity alone yields functional instability rather than structural preservation. This suggests that epistemic competence cannot be passively guaranteed through architecture alone, motivating the need for explicit constraints at the behavioural level.
\end{document}